\documentclass[conference]{IEEEtran}
\IEEEoverridecommandlockouts
\usepackage{amsmath,amssymb,amsthm,graphicx,booktabs,cite,url,microtype}
\usepackage{needspace}
\usepackage{algorithm,algpseudocode}
\usepackage[hidelinks]{hyperref}
\hypersetup{
  pdftitle={FoldQuantVLA: Native Low-Bit Quantization of Vision-Language-Action Models via Consistent Folding},
  pdfauthor={Hung T. Ho, Khanh D. Nguyen, Quang D. Nguyen, Thanh Q. Duong, Ngan Le, Meng Guo, Vien A. Ngo, An T. Le},
  pdfkeywords={Vision-language-action models, post-training quantization, robot deployment}
}
\graphicspath{{figures/}}

\newtheorem{proposition}{Proposition}
\DeclareMathOperator{\diag}{diag}
\DeclareMathOperator{\clip}{clip}
\DeclareMathOperator{\roundop}{round}

\newcommand{\pih}{\ensuremath{\pi_{0.5}}}
\newcommand{\oproj}{\texttt{o\_proj}}
\newcommand{\dproj}{\texttt{down\_proj}}

\begin{document}
\bstctlcite{compactrefs}

\title{\fontsize{20}{24}\selectfont
FoldQuantVLA: Native Low-Bit Quantization of\\
Vision-Language-Action Models via Consistent Folding}
\author{\IEEEauthorblockN{Hung T. Ho\textsuperscript{1}, Khanh D. Nguyen\textsuperscript{1},
Quang D. Nguyen\textsuperscript{1}, Thanh Q. Duong\textsuperscript{1},\\
Ngan Le\textsuperscript{2}, Meng Guo\textsuperscript{3},
Vien A. Ngo\textsuperscript{1,4}, An T. Le\textsuperscript{1,4,5,*}}
\thanks{\textsuperscript{1}VinRobotics, Vietnam.
\textsuperscript{2}AICV Lab, EECS Department, University of Arkansas.
\textsuperscript{3}School of Advanced Manufacturing and Robotics, Peking University, China.
\textsuperscript{4}Center for AI Research, VinUniversity, Vietnam.
\textsuperscript{5}Intelligent Autonomous Systems, TU Darmstadt, Germany.
*Corresponding author: An T. Le (\href{mailto:an@robot-learning.de}{an@robot-learning.de}).}}
\maketitle

\begin{abstract}
Low-bit vision-language-action inference must reduce observation-to-action latency while preserving robot behavior. We present FoldQuantVLA, a post-training quantization framework that carries a consistent activation representation through calibration, weight rounding, and native integer execution. It combines channel scaling and block Hadamard transforms with dynamic per-token quantization, without policy retraining. Custom TensorRT plugins execute projections in both the language backbone and iterative action expert with four-bit weights and activations (W4A4) on Ada GPUs and Jetson AGX Orin. Evaluation spans LIBERO, SimplerEnv, and two robot platforms. Across three GR00T checkpoints and $\pi_{0.5}$, W4A4 achieves $1.20$--$1.33\times$ speedups over floating-point TensorRT on Orin and $1.25$--$1.52\times$ on desktop. Retaining language attention-output and feed-forward down projections at eight bits (W8A8) improves held-out action fidelity on all four checkpoints. Across four real-robot tasks, this configuration raises observed GR00T N1.7 success from $80.0\%$ with uniform W4A4 to $92.5\%$ over 80 trials per configuration, with a measured additional Orin latency of 1\,ms.
\end{abstract}
\begin{IEEEkeywords}
Vision-language-action models, post-training quantization, robot deployment.
\end{IEEEkeywords}
\section{Introduction}
The vision-language-action (VLA) policies evaluated here combine visual processing, a language backbone, and an iterative denoising action expert~\cite{grootn1,pi05}. Both backbone and expert lie on the latency-critical path. Quantization must also preserve closed-loop behavior: action errors change subsequent observations.
\begin{figure}[!t]
\centering\includegraphics[width=\columnwidth]{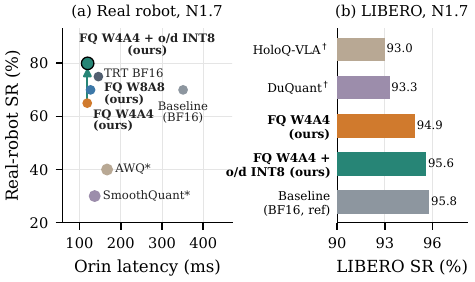}
\caption{FoldQuantVLA at a glance. (a) N1.7 on Orin: observation-to-action latency versus real-robot success rate (SR) on block-stacking task T4, with 20 trials per configuration. Upper left is better: selective INT8 raises observed T4 success from $65\%$ to $80\%$ for 1\,ms. $\ast$ModelOpt SmoothQuant (W8A8) and AWQ (W4A16): T4 only (Table~\ref{tab:robot}). (b) LIBERO SR on N1.7; $\dagger$: our emulated ports (Table~\ref{tab:sota}). Baseline (ref): BF16 PyTorch; FQ: FoldQuantVLA; TRT BF16: floating-point TensorRT; o/d INT8: \oproj{}/\dproj{} at W8A8.}
\label{fig:teaser}
\end{figure}

Post-training quantization reduces precision without retraining. Channel scaling, rotations, and activation-aware rounding address large activations~\cite{smoothquant,quarot,spinquant,duquant,smoothrot,gptq}, but their integration requires consistent calibration and execution coordinates. We assign one change of basis to each shared activation site, fold its inverse into every consuming weight, and round after fixing that basis.

We study uniform W4A4 projections and a selective-INT8 configuration for sensitive language output projections (Fig.~\ref{fig:teaser}). Floating-point TensorRT controls measure gains beyond compilation. Held-out action fidelity screens numerical degradation, while simulator and physical-robot trials assess behavior. Project and code links accompany the paper.\footnote{Project: \url{https://review-artifact-27f4.github.io/foldquantvla/}. Code repository: \url{https://github.com/cair-vinuni/FoldQuantVLA}.}

\needspace{6\baselineskip}
\noindent Our contributions are:
\begin{itemize}
\item \textbf{Consistent folding.} A folding contract that assigns one transformed coordinate system to each shared activation site and carries it through calibration, rounding, and native low-bit execution.
\item \textbf{A W4A4 recipe for VLA policies.} Channel scaling, block Hadamard rotations via the fast Walsh--Hadamard transform (FWHT), and per-token quantization avoid timestep-indexed scale tables. Proposition~\ref{prop:homog} gives conditions for step-invariant normalized projection error.
\item \textbf{Selective INT8 at \oproj{} and \dproj{}.} W8A8 at these sites improves held-out fidelity on all four checkpoints and N1.7 robot success from $80.0\%$ to $92.5\%$ for $1$\,ms on Orin.
\item \textbf{Deployable native low-bit engines.} TensorRT plugins execute W4A4 projections in both backbone and expert, achieving $1.20$--$1.33\times$ speedups over floating-point TensorRT on Orin, with evaluation on LIBERO, SimplerEnv, ALOHA, and SO-101.
\end{itemize}
\section{Related Work}
\textbf{Transforms and rounding.} SmoothQuant scales channels~\cite{smoothquant}; SliceGPT formalizes computational invariance, QuaRot exploits it for quantization, and SpinQuant learns rotations~\cite{slicegpt,quarot,spinquant}. DuQuant, SmoothRot, FlatQuant, and OSTQuant combine or optimize equivalent transforms~\cite{duquant,smoothrot,flatquant,ostquant}. GPTQ approximately optimizes weight rounding~\cite{gptq}, while AWQ protects salient channels~\cite{awq}. Our focus is coordinate consistency across shared consumers and the executable VLA graph.

\textbf{Low-bit execution.} Atom and QServe couple quantization with serving kernels~\cite{atom,qserve}; SVDQuant targets four-bit diffusion~\cite{svdquant}. MixLLM and HAWQ-V3 study mixed precision~\cite{mixllm,hawqv3}. We retain two language projection types at INT8, with integer GEMMs in both configurations at batch~1.

\textbf{VLA quantization.} QuantVLA, QVLA, ActQuant, and DyQ-VLA study calibration, precision allocation, and temporal structure~\cite{quantvla,qvla,actquant,dyqvla}. HoloQ-VLA combines SVD--Hadamard rotations and per-step action-head scales, with native INT4 in both \pih{} backbone and expert~\cite{holoqvla}. Section~\ref{sec:sota} separates our emulated N1.7 recipe comparisons from reported \pih{} results and hardware-specific engine measurements.
\section{Method}

FoldQuantVLA first fixes a shared activation transform and folds its inverse into each consuming weight. It then rounds the folded weights and executes the matching activation transform before integer multiplication. Algorithm~\ref{alg:foldquant} summarizes these offline and runtime stages.

\subsection{Scope and execution model}

A policy maps an observation $o$, instruction $\ell$, and initial denoising noise $\epsilon$ to an action chunk,
\begin{equation}
a=\pi(o,\ell,\epsilon)=\mathcal{D}\bigl(\mathcal{E}(\mathcal{L}(\mathcal{V}(o),\ell),\epsilon)\bigr)\in\mathbb{R}^{H\times d_a},
\label{eq:policy}
\end{equation}
where $\mathcal{V}$ is visual processing, $\mathcal{L}$ the language backbone, $\mathcal{E}$ the iterative action expert, and $\mathcal{D}$ the action decoder. The chunk contains $H$ actions of dimension $d_a$, and proprioception enters $\mathcal{E}$ implicitly. The expert runs $n_\tau$ denoising iterations: four for GR00T, and ten for \pih{}. The controller executes the first $K\le H$ actions before replanning.

We quantize attention and feed-forward projection matrix multiplications (GEMMs) in $\mathcal{L}$ and $\mathcal{E}$. A token is a feature vector; its coordinates are channels. W$b_w$A$b_a$ denotes weight/activation bit widths. Vision, non-projection attention operations, normalization, residual arithmetic, and the action decoder remain floating point. In W4A4 engines, GR00T's adaptive layer-normalization modulation uses weight-only INT4 with floating-point activations. W4A4 and W8A8 projections use per-token activation scales and per-output-channel weight scales. An N1.6 ALOHA audit over 30 observations motivates per-token scaling: at the worst language layer (\dproj{}), activation spread $\max|x|/\mathrm{median}|x|$ is $557{,}518\times$ per tensor versus $8{,}650\times$ per token. The external per-tensor control also differs in calibration and execution (Table~\ref{tab:ablation}). Figure~\ref{fig:fold} illustrates one projection.

\begin{figure*}[!t]
\centering
\includegraphics[width=\textwidth,height=0.25\textheight,keepaspectratio]{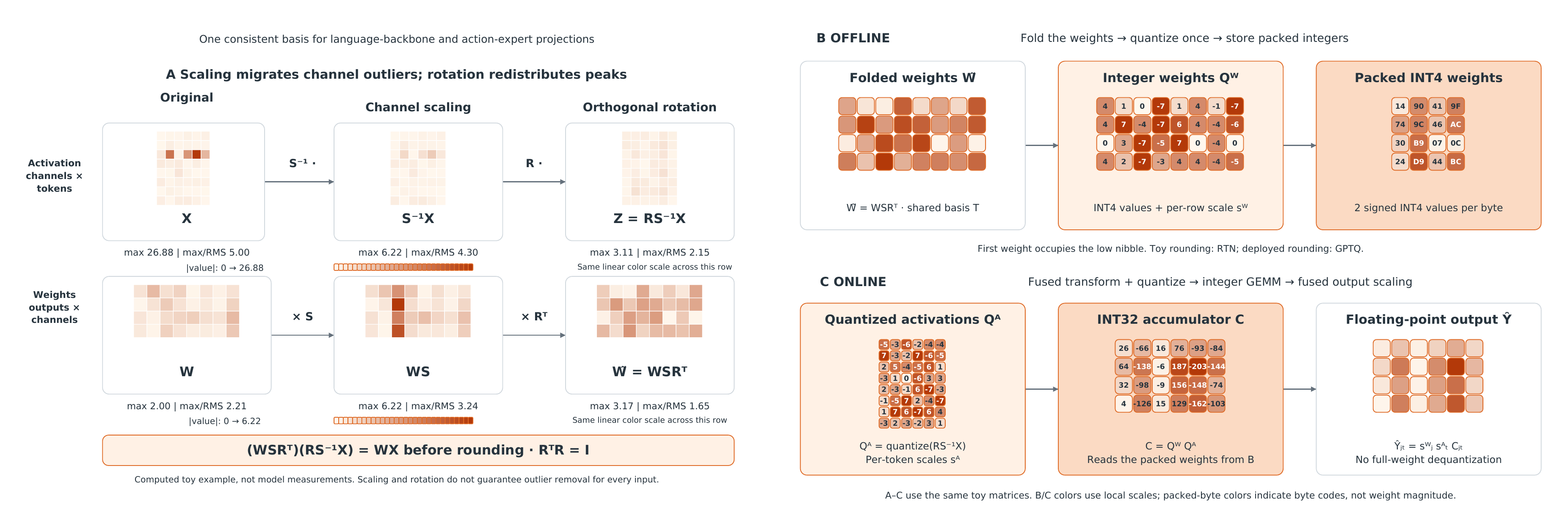}
\caption{Fold offline, execute in low bits (toy matrices, not model measurements). (A) Channel scaling $S^{-1}$ migrates activation outliers into the weights and the orthogonal rotation $R$ mixes the scaled channels; $(WSR^{\mathsf T})(RS^{-1}X)=WX$ before rounding, with activation tokens in the columns of $X$. (B) Offline, the folded weight $\widetilde W=WSR^{\mathsf T}$ is rounded once to INT4 with per-output-channel scales and nibble-packed. (C) Online, the fused prologue quantizes $RS^{-1}X$ per token, the integer GEMM accumulates in INT32, and the output is rescaled by $s^{W}_j\delta_t$ (with $s^A_t=\delta_t$ in the figure) without dequantizing the weights.}
\label{fig:fold}
\end{figure*}

\subsection{A consistent fold}
\label{sec:fold}

For one projection $y=Wx+b$, an invertible transform $T$ gives $z=Tx$ and $\widetilde W=WT^{-1}$, so $\widetilde Wz+b=Wx+b$ before quantization. The same identity must hold for every projection that reuses $z$.

Unroll one policy call into a finite directed acyclic graph with operators $x_v=f_v((x_u)_{u\in\mathrm{pa}(v)})$, where $\mathrm{pa}(v)$ indexes the inputs. Site $v$ holds one activation tensor $x_v$ with $d_v$ channels, shared by its consumers. A fixed invertible $T_v\in\mathbb{R}^{d_v\times d_v}$ acts tokenwise, giving $z_v=T_vx_v$; graph and engine boundaries use identity transforms.

\begin{proposition}[Consistent folding]
\label{prop:fold}
Define $\widetilde f_v((z_u)_{u\in\mathrm{pa}(v)})=T_vf_v((T_u^{-1}z_u)_{u\in\mathrm{pa}(v)})$. The transformed graph computes the same external function in exact arithmetic, as in equivalent-transform methods~\cite{slicegpt,quarot}. An affine operator $x_v=\sum_{u\in\mathrm{pa}(v)}W_{vu}x_u+b_v$ becomes
\begin{equation}
\widetilde W_{vu}=T_vW_{vu}T_u^{-1},\qquad \widetilde b_v=T_vb_v.
\label{eq:fold}
\end{equation}
\end{proposition}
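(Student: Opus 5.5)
The plan is to prove the invariant $z_v=T_vx_v$ for every site $v$ by induction along a topological order of the unrolled DAG, and then read off the external function at the boundary sites, where $T_v=I$. Fix one input to the policy call. Let $x_v$ denote the activations of the original graph and $z_v$ those of the transformed graph, whose operators are the $\widetilde f_v$. Since the graph is finite and acyclic, we can order the sites so that every parent precedes its child.

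\emph{Base case.} Source sites (graph inputs) are boundary sites with $T_v=I$, and both graphs feed them the same external inputs. Hence $z_v=x_v=T_vx_v$.

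\emph{Inductive step.} Assume $z_u=T_ux_u$ for all $u\in\mathrm{pa}(v)$. Then
\begin{equation*}
z_v=\widetilde f_v\bigl((z_u)_{u}\bigr)=T_vf_v\bigl((T_u^{-1}T_ux_u)_{u}\bigr)=T_vf_v\bigl((x_u)_u\bigr)=T_vx_v .
\end{equation*}
This step uses only the invertibility of each $T_u$ and the fact that every consumer of $u$ receives the same $z_u$. Sharing is exactly what makes the single-transform-per-site contract necessary: every consumer must apply the same $T_u^{-1}$. Because $T_u$ acts tokenwise, with the same matrix on each token, the identity also holds for tensors with many tokens, including operators that mix tokens, such as attention. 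At output and engine-boundary sites $T_v=I$, so $z_v=x_v$, and the two graphs compute the same external function in exact arithmetic.

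For the affine case, substitute $f_v=\sum_u W_{vu}(\cdot)+b_v$ into the definition of $\widetilde f_v$ and use linearity of $T_v$:
\begin{equation*}
\widetilde f_v\bigl((z_u)_u\bigr)=\sum_{u}T_vW_{vu}T_u^{-1}z_u+T_vb_v .
\end{equation*}
This is again affine, with the weights and bias of \eqref{eq:fold}. Applied tokenwise to token columns, this means left-multiplying each token vector, consistent with Fig.~\ref{fig:fold}, where $T=RS^{-1}$. Taking $T_v=I$ at a projection output, as when the output feeds a floating-point nonlinearity, recovers the single-projection identity $\widetilde W=WT^{-1}$.

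I expect no real obstacle, since the argument is a routine induction. The one point needing care is non-affine operators such as normalization, softmax attention and residual additions. For these the proposition does not claim that $\widetilde f_v$ is simple; it is defined by conjugation, so equivalence holds regardless. Whether such a site can carry a nontrivial $T_v$ cheaply in practice, for example when a rotation commutes with RMS normalization up to a folded scale, is a separate implementation question that lies outside this statement.
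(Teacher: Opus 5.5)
Your proposal is correct and follows the paper's own proof: induction in topological order establishing $z_v=T_vx_v$, identity transforms at the boundary sites to recover the external function, and direct substitution into the affine operator to obtain \eqref{eq:fold}. The only difference is that you make explicit the base case at source sites and the multi-token bookkeeping, both of which the paper's three-line proof leaves implicit.
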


\begin{proof}
In topological order, substituting $z_u=T_ux_u$ gives $z_v=T_vx_v$. Identity output transforms preserve the external function. Substitution into the affine operator gives \eqref{eq:fold}.
\end{proof}

Every projection consuming site $v$ must fold $T_v^{-1}$ into its weights. Query/key/value projections can share one transformed input, as can feed-forward up/gate branches, provided activation bit widths and clipping ratios also match. A consumer using $T_i\ne T_v$ incurs pre-rounding error $W_i(T_i^{-1}T_v-I)x_v$. Such a mismatch lowers N1.6 action-module cosine to $0.9923$ (Table~\ref{tab:ablation}).

\subsection{Transforms supported by the runtime}

A transform must fit the runtime graph. Every transform acts on a projection input and projection outputs keep $T=I$, so residual additions, RoPE, and attention dot products are unchanged; general rotations cannot pass through nonlinearities. We restrict projection-input transforms to
\begin{equation}
T_v=D_v^{\mathrm{o}}R_vD_v^{\mathrm{i}},\qquad R_v^{\mathsf T}R_v=I,
\label{eq:family}
\end{equation}
where $D_v^{\mathrm{i}}$ and $D_v^{\mathrm{o}}$ are positive diagonal matrices. With unchanged output coordinates, a consumer stores $\widetilde W_{iv}=W_{iv}(D_v^{\mathrm{i}})^{-1}R_v^{\mathsf T}(D_v^{\mathrm{o}})^{-1}$.

\emph{Fold-before} uses $(D_v^{\mathrm{i}},D_v^{\mathrm{o}})=(S_v^{-1},I)$, where $S_v=\diag(s_{v,c})$ is calibrated in the original coordinates, giving $\widetilde W=WS_vR_v^{\mathsf T}$. \emph{Fold-after} uses $(I,P_v^{-1})$, where $P_v$ is positive diagonal and calibrated after rotation, giving $\widetilde W=WR_v^{\mathsf T}P_v$. Both preserve the unquantized projection but expose different coordinates to rounding. In general, $R_vS_vR_v^{\mathsf T}$ is dense and cannot be represented by a diagonal $P_v$.

Fold-before is the FWHT default because division fuses into the load loop; fold-after needs an extra pass. At language SmoothQuant sites, let $a_{v,c}$ be the maximum absolute calibration activation in channel $c$, and $w_{v,c}$ the largest input-channel weight magnitude across shared consumers. For $\alpha\in[0,1]$, set $s_{v,c}=a_{v,c}^{\alpha}/w_{v,c}^{1-\alpha}$, with numerical floors and bounded positive scales~\cite{smoothquant}. Statistics use the coordinates where scaling is applied; at $\alpha=1$, unclamped scales equalize calibrated ranges before rotation.

The language path uses the normalized block Hadamard transform $R_v=I_{d_v/\beta}\otimes\beta^{-1/2}\mathcal{H}_\beta$, with $\beta=64$ and $\beta\mid d_v$. Since $\mathcal{H}_\beta\mathcal{H}_\beta^{\mathsf T}=\beta I$, normalization ensures $R_v^{-1}=R_v^{\mathsf T}$. Here $\mathcal{H}_\beta$ is the Sylvester Hadamard matrix with entries $\pm1$, and $\otimes$ repeats it over channel blocks. The FWHT costs $O(d_v\log\beta)$ and preserves the squared norm; it need not reduce the peak of every input. The runtime prologue applies this transform. The action expert uses the same block Hadamard by default, applied as an FWHT, or, optionally, a stored composite rotation (zigzag channel permutation and per-block SVD--Hadamard matrices).

\subsection{Normalization gains}

For a projection following root-mean-square (RMS) normalization, write $x_v=\Gamma_v\bar h_v$, where $\Gamma_v=\diag(\gamma_v)$ contains the learned gain and $\bar h_v$ is normalized without it. Then
\begin{equation}
z_v=D_v^{\mathrm{o}}R_v(D_v^{\mathrm{i}}\Gamma_v)\bar h_v.
\label{eq:norm}
\end{equation}
The product $D_v^{\mathrm{i}}\Gamma_v$ replaces the learned gain without moving a scale through the RMS denominator. This requires a preceding gain, absent at the evaluated \oproj{} and \dproj{} inputs. Other exact folds remain possible: a diagonal scale before \dproj{} can be absorbed into the linear up branch of a gated MLP without crossing the gate nonlinearity.

\subsection{Offline parameters and runtime computation}

Scales, rotations, folded weights, and clipping ratios are fixed at build time. The activation transform remains runtime work. Folding absorbs fixed parameters into neighboring operators; fusion reduces launches and intermediate tensors (Fig.~\ref{fig:arch}).

At a fixed site (index $v$ suppressed), token $z_t=Tx_t$ is quantized to the floating-point reconstruction
\begin{equation}
\widehat z_t=\delta_t\clip\!\left(\roundop(z_t/\delta_t),-q,q\right),\quad \delta_t=\frac{r\lVert z_t\rVert_\infty}{q},
\label{eq:quant}
\end{equation}
where $b_a\in\{4,8\}$, $q=2^{b_a-1}-1$, $r\in(0,1]$ is a clipping ratio calibrated only at language INT4 sites ($r=1$ elsewhere), and rounding and clipping are elementwise. We define $\widehat z_t=0$ when $z_t=0$. Each token supplies its own scale, without a timestep-indexed scale table.

\begin{samepage}
\begin{proposition}[Amplitude invariance]
\label{prop:homog}
In exact arithmetic, \eqref{eq:quant} is positively homogeneous: $\widehat{(\lambda z_t)}=\lambda\widehat z_t$ for $\lambda>0$. For a fixed transform $T$ and fixed quantized folded weights $\widehat W_T$, with the same bias in both projections, the residual $e_T(u)=\widehat W_T\widehat{Tu}-Wu$ therefore satisfies $e_T(\lambda u)=\lambda e_T(u)$. If inputs are almost surely nonzero and $u_\tau/\lVert u_\tau\rVert_2$ follows the same distribution $\nu$ at every denoising step, then
\begin{equation}
\mathcal{L}_\tau(T)=\mathbb{E}\!\left[\frac{\lVert e_T(u_\tau)\rVert_2^2}{\lVert u_\tau\rVert_2^2}\right]
\end{equation}
is independent of $\tau$. If the directional-distribution assumption holds for every candidate in a common feasible set, any minimizer is shared across steps.
\end{proposition}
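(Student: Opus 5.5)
The plan has four steps, taken in order: homogeneity of the quantizer \eqref{eq:quant}, homogeneity of the residual, a change of variables in the expectation, and the minimizer claim.

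\emph{Step 1 (quantizer).} Fix $\lambda>0$ and a nonzero token $z_t$. The norm is absolutely homogeneous, so the scale in \eqref{eq:quant} for $\lambda z_t$ is $\delta'_t=r\lVert\lambda z_t\rVert_\infty/q=\lambda\delta_t$. Hence $\lambda z_t/\delta'_t=z_t/\delta_t$ exactly. Rounding and clipping act elementwise on this unchanged argument, so they return the same integer vector. Multiplying back by $\delta'_t$ gives $\widehat{(\lambda z_t)}=\lambda\widehat z_t$. The case $z_t=0$ is covered by the convention $\widehat z_t=0$.

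\emph{Step 2 (residual).} Linearity of $T$ gives $T(\lambda u)=\lambda Tu$, so Step 1 yields $\widehat{T(\lambda u)}=\lambda\widehat{Tu}$. The weights $\widehat W_T$ are fixed and do not depend on $u$, so $\widehat W_T\widehat{T(\lambda u)}=\lambda\widehat W_T\widehat{Tu}$. Since $Wu$ is linear in $u$, we get $e_T(\lambda u)=\lambda e_T(u)$. The biases are equal and cancel in the difference, so they never enter $e_T$.

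\emph{Step 3 (step independence).} Inputs are almost surely nonzero, so write $u_\tau=\lVert u_\tau\rVert_2\,\omega_\tau$ with $\omega_\tau=u_\tau/\lVert u_\tau\rVert_2\sim\nu$. Step 2 with $\lambda=\lVert u_\tau\rVert_2$ gives, almost surely,
\[
\frac{\lVert e_T(u_\tau)\rVert_2^2}{\lVert u_\tau\rVert_2^2}=\lVert e_T(\omega_\tau)\rVert_2^2 .
\]
Hence $\mathcal{L}_\tau(T)=\mathbb{E}_{\omega\sim\nu}\lVert e_T(\omega)\rVert_2^2$, which depends only on $\nu$ and $T$, not on $\tau$.

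\emph{Step 4 (shared minimizer).} Suppose the directional assumption holds for every candidate $T$ in the common feasible set. Then Step 3 shows that the functions $T\mapsto\mathcal{L}_\tau(T)$ coincide for all $\tau$. Identical objectives over the same set have identical argmin sets, so any minimizer is shared across steps. If the directional law were allowed to depend on $T$, we would read $\nu$ as $\nu_T$; the argument is unchanged.

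\emph{Main obstacle.} The main difficulty is technical rather than conceptual. The map $\omega\mapsto\lVert e_T(\omega)\rVert_2^2$ must be measurable so that the expectation is defined. It is piecewise continuous, because rounding is a Borel step function, so it is measurable. It is also bounded on the unit sphere, since $\lvert\widehat z\rvert\le\lvert z\rvert+\delta/2$ elementwise, so the expectation is finite.

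A second point is that the rounding tie-breaking rule must depend only on $z_t/\delta_t$. Without that, the integer vector in Step 1 could change under scaling. Exact arithmetic is essential here: in floating point, $\lambda\delta_t$ is not computed exactly, and Step 1 can fail at rounding boundaries.
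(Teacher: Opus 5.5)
Your proposal is correct and matches the paper's proof: $\delta_t$ scales by $\lambda$, so the rounding argument is unchanged, which gives both homogeneity identities, and writing $u_\tau=\lVert u_\tau\rVert_2\,\omega_\tau$ reduces $\mathcal{L}_\tau(T)$ to $\mathbb{E}_{\omega\sim\nu}\lVert e_T(\omega)\rVert_2^2$. Your Step 4 and your measurability, boundedness, and zero-token remarks spell out details the paper leaves implicit.
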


\begin{proof}
Scaling $z_t$ by $\lambda>0$ scales $\delta_t$ by $\lambda$ and leaves the rounding argument unchanged. This gives both homogeneity identities. Writing $u_\tau=\lVert u_\tau\rVert_2v_\tau$ yields $\mathcal{L}_\tau(T)=\mathbb{E}_{v\sim\nu}\lVert e_T(v)\rVert_2^2$, independent of $\tau$.
\end{proof}
\end{samepage}

One engine accepts the model's timestep conditioning at every iteration without a distributional assumption. Proposition~\ref{prop:homog} additionally requires stable input directions for step-invariant normalized risk. Dynamic scaling removes amplitude variation; it does not address changing channel distributions~\cite{holoqvla} or accumulated error~\cite{dyqvla,accuquant,qdrift}, nor establish optimality for decoded actions or closed-loop behavior.

\begin{figure}[!t]
\centering
\includegraphics[width=\columnwidth]{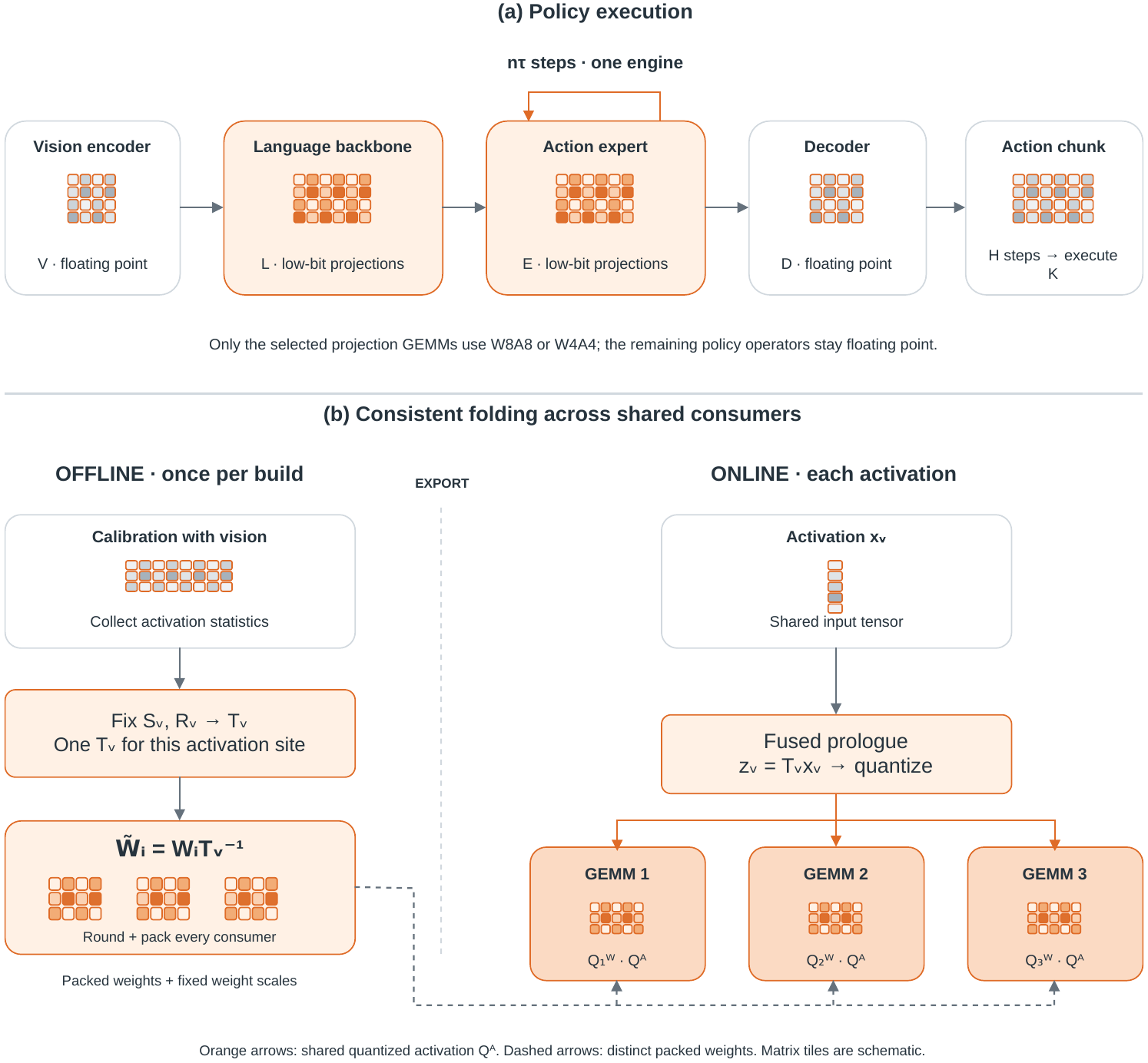}
\caption{(a) Policy execution: W8A8/W4A4 projections in $\mathcal{L}$ and $\mathcal{E}$; one expert engine serves $n_\tau$ steps. The controller executes $K$ of $H$ actions. GR00T's weight-only modulation is omitted. (b) Shared-input execution: offline, fix $T_v$, fold $\widetilde W_i=W_iT_v^{-1}$, round, and pack. Online, transform and quantize once per site; consumers use their own weights and output scales (Algorithm~\ref{alg:foldquant}).}
\label{fig:arch}
\end{figure}

\subsection{Rounding and native execution}

Collect $N_v$ calibration tokens as columns of $U_v\in\mathbb{R}^{d_v\times N_v}$ and set $Z_v=T_vU_v$; one observation may supply many tokens. For four-bit projection weights, GPTQ approximately minimizes reconstruction error in these transformed coordinates:
\begin{align}
\min_{Q\in\mathcal{Q}}\;&\lVert(\widetilde W-Q)Z_v\rVert_F^2, \nonumber\\
G_v&=Z_vZ_v^{\mathsf T}=T_vU_vU_v^{\mathsf T}T_v^{\mathsf T},
\label{eq:gptq}
\end{align}
where $\mathcal{Q}$ contains representable dequantized weight matrices under the chosen scale layout, and $G_v\in\mathbb{R}^{d_v\times d_v}$ is the unnormalized second moment. GPTQ uses $G_v$ with diagonal damping for numerical stability~\cite{gptq}; eight-bit weights use round-to-nearest (RTN). Calibration through the policy forward pass includes visual conditioning. The objective scores weight rounding alone, leaving activation quantization error outside it~\cite{gptaq,qronos}. GPTQ improves fidelity over RTN (Table~\ref{tab:ablation}).

The W4A4 backend uses packed signed-INT4 weights, dynamically quantized signed-INT4 activations, and INT32 accumulation. Write $\widehat W_{jc}=s^W_jq^W_{jc}$ and $\widehat z_{ct}=\delta_tq^A_{ct}$, where $q^W,q^A$ are integer codes, $j$ indexes output channels, and $c$ input channels. The output is
\begin{equation}
\widehat y_{jt}=s^W_j\delta_t\sum_c q^W_{jc}q^A_{ct}+b_j.
\label{eq:integer}
\end{equation}
Thus rescaling and bias addition are floating point; a zero token gives $\widehat y_t=b$. W8A8 uses the corresponding integer path and folded parameters. TensorRT~10.3's built-in INT4 quantization is weight-only~\cite{tensorrt_quant}; our plugins provide the INT4 activation path. The transform and per-token range reduction execute within our plugin, although plugin boundaries can prevent some TensorRT fusions. HadaCore studies related Hadamard kernels~\cite{hadacore}.

Both plugins use CUTLASS~v3.6.0. The four-bit GEMM uses an \texttt{m16n8k64} MMA shape and the eight-bit path uses \texttt{m16n8k32}. Native INT4 runs on Ada and Orin; Section~\ref{sec:threats} documents H100 lowering.

\begin{algorithm}[t]
\caption{FoldQuantVLA at targeted projection sites}
\label{alg:foldquant}
\footnotesize
\begin{algorithmic}[1]
\Require Policy, calibration observations, fold preset, site precisions $p_v\in\{4,8\}$; $\mathcal C(v)$ denotes consumers of site $v$.
\Ensure Packed weights, fixed scales/transforms, and matching runtime operators.
\Statex \textbf{Offline: once per build}
\For{each targeted activation site $v$}
  \State Collect calibration tokens $U_v$ through the policy.
  \State Fix shared $T_v$ from~\eqref{eq:family} and clipping ratio $r_v$ in~\eqref{eq:quant}.
  \State $Z_v\gets T_vU_v$; if $p_v=4$, form $G_v\gets Z_vZ_v^{\mathsf T}$.
  \For{each consumer $i\in\mathcal C(v)$}
    \State $\widetilde W_{iv}\gets W_{iv}T_v^{-1}$.
    \State Round $\widetilde W_{iv}$: GPTQ with $G_v$ at four bits, RTN at eight bits.
    \State Store packed codes $q^W_{iv}$, output-channel scales $s^W_{iv}$, and bias $b_i$.
  \EndFor
\EndFor
\Statex \textbf{Runtime: for each token $x_t$ at site $v$}
\State $z_t\gets T_vx_t$.
\State If $z_t=0$, set $(q^A_t,\delta_t)=(0,0)$; otherwise quantize via~\eqref{eq:quant}.
\For{each consumer $i\in\mathcal C(v)$}
  \State $c_{it}\gets q^W_{iv}q^A_t$ \Comment{INT32 accumulation}
  \State $\widehat y_{it}\gets\delta_t(s^W_{iv}\odot c_{it})+b_i$.
\EndFor
\end{algorithmic}
\end{algorithm}
Algorithm~\ref{alg:foldquant} uses $p_v=b_w=b_a$ for each site and $\odot$ for elementwise multiplication. It shows the logical transform $T_vx_t$: when a scale is absorbed into normalization via~\eqref{eq:norm}, the prologue applies only the remaining factors.

\subsection{Selective INT8 at output projections}
\emph{FQ W4A4} uses uniform low-bit projections. In \emph{FQ W4A4 + o/d INT8}, language attention output (\oproj{}) and feed-forward down (\dproj{}) projections use W8A8; the remaining quantized projections use W4A4. Both configurations retain integer GEMMs at every targeted projection.

These two projection types return features to the residual stream. Their inputs lack the normalization gain used in~\eqref{eq:norm}, and \dproj{} has the largest language activation spread in our audit. These observations motivate the precision choice, but do not prove the cause of sensitivity. The same site choice applies to all four checkpoints, without retraining.

QuantVLA keeps action-expert attention projections in floating point~\cite{quantvla}. BitNet a4.8 uses eight-bit inputs at attention output and feed-forward down projections, with sparsification, in a 1.58-bit model trained for that format~\cite{bitneta48}. Our post-training choice uses W8A8 without sparsification and preserves the folding contract. On N1.6/N1.7, o/d INT8 also uses the dense-rotation calibration preset, so comparisons with the main W4A4 rows include that change; N1.5 and \pih{} change site precision only.

We screen decoded actions against BF16 under shared initial noise, checking coordinate errors and held-out minima because cosine ignores magnitude. Each selected configuration is rebuilt and checked on the assembled engine before closed-loop evaluation.
\section{Experimental Protocol}
We evaluate GR00T N1.7, N1.6, N1.5~\cite{grootn17,grootn16,grootn15}, and \pih{}~\cite{pi05}. N1.7 and N1.5 use our four-suite LIBERO fine-tunes, N1.6 a public four-suite fine-tune, and \pih{} an author-released checkpoint. GR00T uses four denoising steps and action chunks of length 16; the executed prefix is eight actions for N1.7/N1.6 and one for N1.5. For \pih{}, these quantities are ten, ten, and five.

Success tables report percentages (higher is better); bold values mark the best result for each task or checkpoint, including ties. FQ denotes FoldQuantVLA.

\textbf{Fidelity.} P1 checks 32 seeded observations on H100 MIG 3g.40gb against BF16 PyTorch. P2 uses 32 held-out mid-trajectory observations, disjoint from the 128 calibration observations, on RTX~4070~Ti~SUPER. Both compare flattened decoded action chunks under shared initial noise. P2 reports minimum cosine similarity (directional agreement) and median maximum absolute coordinate error (in policy action units) over observations.

\textbf{LIBERO.} P3 uses four suites~\cite{libero}, ten tasks per suite, and 20 initial states per task: 800 episodes per arm, seed~7, 520-step cap, on H100 MIG. Table~\ref{tab:success} compares five deployment arms. Tests use paired per-task success differences with a $t_{39}$ reference; the four preset-matched $p$-values are unadjusted and exceed $0.05$.

\textbf{Latency.} Desktop timing uses one run per checkpoint, 60 observation-to-action iterations after ten warmups, on a dedicated RTX~4070~Ti~SUPER with TensorRT~10.15. Orin uses Jetson AGX Orin with TensorRT~10.3; both use batch~1. Controls include eager PyTorch and floating-point TensorRT. ModelOpt baselines~\cite{nvidia_modelopt} are included on Orin; desktop ModelOpt/compile values use different timing paths and are excluded.

\section{Comparison with Prior W4A4 Recipes}
\label{sec:sota}
\begin{table}[t]
\centering\scriptsize
\setlength{\tabcolsep}{3.4pt}\renewcommand{\arraystretch}{0.95}
\caption{LIBERO success rate (\%) against prior low-bit recipes. Spat./Obj.: Spatial/Object suites; Avg.: mean over four suites. $\Delta$: change from BF16 in percentage points, computed before rounding. Protocols differ between checkpoint blocks.}
\label{tab:sota}
\begin{tabular}{@{}lrrrrrr@{}}
\toprule
Method & Spat. & Obj. & Goal & Long & Avg. & $\Delta$ \\
\midrule
\multicolumn{7}{@{}l}{\emph{GR00T N1.7} (NVIDIA per-suite checkpoints, $720$-step cap)}\\
BF16 PyTorch (reference) & \textbf{98.5} & \textbf{98.5} & 92.5 & \textbf{93.5} & \textbf{95.8} & -- \\
HoloQ-VLA$^{\dag}$ & 94.0 & \textbf{98.5} & 89.5 & 90.0 & 93.0 & $-2.8$ \\
DuQuant$^{\dag}$ & 97.0 & 97.5 & 95.0 & 83.5 & 93.3 & $-2.5$ \\
\cmidrule(l){1-7}
\textbf{FQ W4A4 (ours)} & \textbf{98.5} & 97.0 & \textbf{95.5} & 88.5 & 94.9 & $-0.9$ \\
\textbf{FQ W4A4 + o/d INT8 (ours)} & 96.5 & \textbf{98.5} & 94.0 & \textbf{93.5} & 95.6 & $\mathbf{-0.1}$ \\
\midrule
\multicolumn{7}{@{}l}{\emph{\pih{}} (author-released checkpoint)}\\
BF16 PyTorch (reference) & 98.5 & 99.0 & 97.5 & 93.5 & 97.1 & -- \\
HoloQ-VLA$^{\ast}$ & 99.0 & 97.0 & \textbf{100.0} & \textbf{96.0}$^{\S}$ & \textbf{98.0} & $\mathbf{+0.9}$ \\
DuQuant$^{\ast}$ & 96.0 & 99.0 & 94.0 & 88.0 & 94.3 & $-2.9$ \\
QuantVLA variant$^{\ast}$ & 94.0 & 98.0 & 80.0 & 56.0 & 82.0 & $-15.1$ \\
SmoothQuant$^{\ast}$ & 83.0 & 88.0 & 40.0 & 26.0 & 59.3 & $-37.9$ \\
\cmidrule(l){1-7}
\textbf{FQ W4A4 (ours)} & 98.0 & \textbf{99.5} & 96.0 & 95.5 & 97.3 & $+0.1$ \\
\textbf{FQ W4A4 + o/d INT8 (ours)} & \textbf{99.5} & 99.0 & 98.5 & 94.0 & 97.8 & $+0.6$ \\
\bottomrule
\end{tabular}

\vspace{2pt}
\parbox{\columnwidth}{\scriptsize \emph{N1.7}: $20$ initial states per task, separate from P3; quantized rows share one harness on RTX~4070~Ti~SUPER; BF16 uses the upstream harness on L4. $^{\dag}$Our emulated implementation of the recipe, released with our code. Paired over $40$ tasks, both FoldQuantVLA arms lead both recipes by $13$--$21$ episodes ($p=0.12$--$0.22$). \emph{\pih{}}: BF16 and FoldQuantVLA rows use the released harness ($20$ trials per task, $K=5$); BF16 ($777/800$) matches the rounded $97.1\%$ float rate reported by HoloQ-VLA. $^{\ast}$Reported by HoloQ-VLA~\cite{holoqvla}: full W4A4 including DiT attention, emulated, ten calibration trajectories, 10 trials per task (400 episodes). The QuantVLA row is a full-projection W4A4 variant, not its original selective W4A8 recipe. $^{\S}$Our re-run of their released pack in their harness: $94.0$.}
\end{table}
Table~\ref{tab:sota} compares prior W4A4 recipes on LIBERO. Both FoldQuantVLA configurations score above our N1.7 HoloQ-VLA and DuQuant ports. The emulated ports retain their own projection coverage, so these compare complete recipes; the paired differences are not significant.

For \pih{}, o/d INT8 reaches $97.8\%$, versus HoloQ-VLA's reported $98.0\%$. Our long-suite re-run of its released pack gives $94.0\%$ versus the reported $96.0\%$ ($p=0.44$). HoloQ-VLA also reports native INT4$\times$INT4 execution at $1.03\times$ ($77.08\to75.12$\,ms, RTX~4080~SUPER)~\cite{holoqvla}; FQ W4A4 reaches $2.22\times$ on RTX~4070~Ti~SUPER (Fig.~\ref{fig:latency}). Different hardware and baselines preclude a direct latency comparison.

\section{Native Low-Bit Deployment}
\subsection{Latency beyond the float engine}
\begin{figure*}[t]
\centering\includegraphics[width=0.92\textwidth]{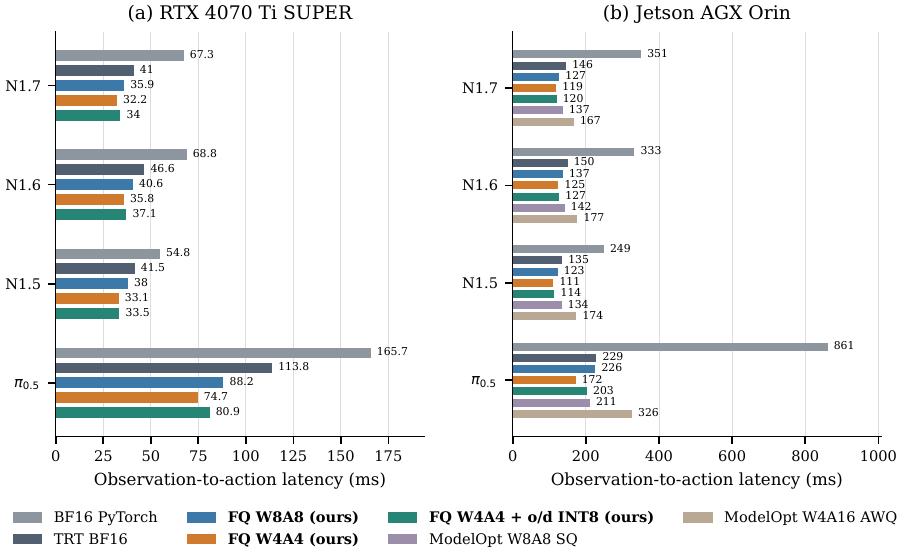}
\caption{Observation-to-action latency at batch~1 (lower is better). Panels share method colors; axes differ across platforms. Bar labels are milliseconds. Each checkpoint is compared with its own floating-point TensorRT engine. W4A4 provides an additional $1.25$--$1.52\times$ speedup on desktop and $1.20$--$1.33\times$ on Orin. Selective o/d INT8 trades some of this gain for action fidelity.}
\label{fig:latency}
\end{figure*}
Figure~\ref{fig:latency} reports observation-to-action latency with W4A4 projections in both backbone and expert. Desktop speedups are $1.66$--$2.22\times$ over eager PyTorch and $1.25$--$1.52\times$ over floating-point TensorRT. W4A4 further reduces latency by $10.3$--$15.3\%$ relative to FQ W8A8.

On Orin, speedups are $2.24$--$5.01\times$ over eager PyTorch and $1.20$--$1.33\times$ over the float engine. Compilation already accounts for $83$--$92\%$ of the eager-to-W4A4 reduction; the remaining gain comes from the low-bit configuration. ModelOpt W4A16 AWQ is slower than the float engine on all four checkpoints: low-bit weight storage alone does not ensure batch-1 acceleration.

\subsection{Cost of selective INT8}
The o/d INT8 configuration adds $1.8$, $1.3$, and $0.4$\,ms to desktop W4A4 for the three GR00T checkpoints, and $1$, $2$, and $3$\,ms on Orin. Thus the N1.7 configuration used on the robot takes $120$\,ms rather than $119$\,ms, while remaining faster than W8A8 at $127$\,ms and float TensorRT at $146$\,ms. The cost is larger for \pih{}: $6.2$\,ms on desktop and $31$\,ms on Orin. Selective INT8 is therefore a checkpoint-dependent trade-off.

\subsection{Storage and resident memory}
For N1.6, weight storage falls from $3802$\,MB for floating-point TensorRT to $2104$\,MB for W8A8 and $1011$\,MB for W4A4; o/d INT8 uses $1189$\,MB. Complete serialized TensorRT engines occupy $5325$ and $2525$\,MB for floating point and W4A4, respectively. Resident memory for float/W4A4 is $6349/4673$\,MiB when replaced PyTorch weights are never materialized, and $10609/8933$\,MiB on the shipped serving path.

\begin{figure*}[!t]
\centering\includegraphics[width=0.82\textwidth]{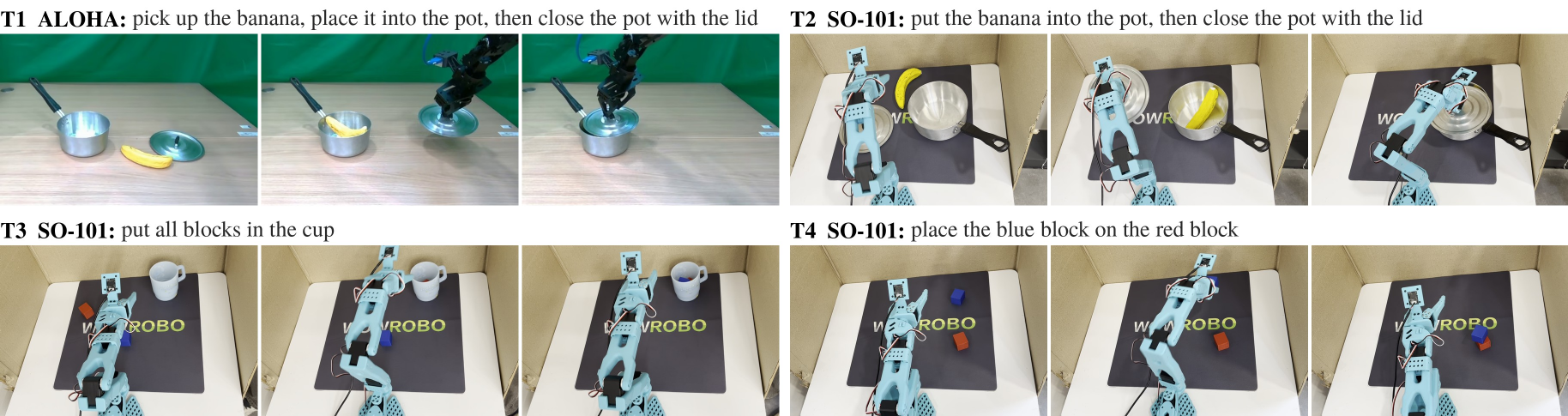}
\caption{Real-robot tasks, with successive frames from one successful episode per task. T1 uses single-arm ALOHA; T2--T4 use SO-101. T1/T2 require the banana in the pot and the lid placed on it; T2 does not require the lid to sit tightly. T3 requires every block inside the cup, and T4 requires the blue block resting on the red block. Frames illustrate tasks, not an arm-to-arm comparison.}
\label{fig:real_tasks}
\end{figure*}
\section{Accuracy and Selective Precision}
\subsection{Main closed-loop comparison}
\begin{table}[t]
\centering\scriptsize
\setlength{\tabcolsep}{4pt}\renewcommand{\arraystretch}{1.0}
\caption{LIBERO P3 success (\%); 800 episodes per cell, using the deployment configurations. Preset-matched selective-precision comparisons are reported separately in the text.}
\label{tab:success}
\begin{tabular*}{\columnwidth}{@{\extracolsep{\fill}}lrrrr@{}}
\toprule
Method & N1.7 & N1.6 & N1.5 & $\pi_{0.5}$\\\midrule
BF16 PyTorch & \textbf{96.25} & 96.38 & 86.38 & 96.50 \\
TRT BF16 & 95.50 & \textbf{97.75} & 86.00 & \textbf{98.00} \\
\midrule
\textbf{FQ W8A8 (ours)} & 95.38 & 95.75 & 87.12 & 97.38 \\
\textbf{FQ W4A4 (ours)} & 95.38 & 95.75 & \textbf{87.38} & 97.12 \\
\textbf{FQ W4A4 + o/d INT8 (ours)} & 95.00 & 96.62 & 87.00 & 97.62 \\
\bottomrule
\end{tabular*}
\end{table}
Table~\ref{tab:success} reports closed-loop behavior for the five deployment configurations.

To isolate the precision change in the o/d INT8 comparison, we use controls with the same base preset and calibration rather than the main W4A4 row. N1.6 improves from $763$ to $773/800$ ($p=0.17$); N1.7 from $757$ to $760$ ($p=0.77$). N1.5 changes from $699$ to $696$ ($p=0.78$), and \pih{} from $777$ to $781$ ($p=0.42$). These separate paired tests do not resolve a LIBERO gain from selective INT8. We therefore motivate it through fidelity and real-robot evidence, without claiming simulator superiority.

\subsection{SimplerEnv Bridge}
\begin{table}[t]
\centering\scriptsize
\setlength{\tabcolsep}{4pt}\renewcommand{\arraystretch}{1.0}
\caption{SimplerEnv Bridge (WidowX) with GR00T N1.6: SR (\%) over 200 episodes per task on RTX~4070~Ti~SUPER. FQ denotes our method.}
\label{tab:simpler}
\begin{tabular*}{\columnwidth}{@{\extracolsep{\fill}}lrrrr@{}}
\toprule
Task & BF16 & \shortstack[r]{\textbf{FQ W8A8}\\\textbf{(ours)}} & \shortstack[r]{\textbf{FQ W4A4}\\\textbf{(ours)}} & \shortstack[r]{\textbf{FQ W4A4 + o/d}\\\textbf{INT8 (ours)}}\\\midrule
Spoon on towel & 63.5 & 63.5 & 66.0 & \textbf{75.5} \\
Carrot on plate & 60.5 & 59.5 & \textbf{70.5} & 66.5 \\
Eggplant in basket & \textbf{92.5} & 89.5 & 58.0 & 70.5 \\
Stack cube & 6.0 & 2.5 & \textbf{7.5} & 5.5 \\
Eggplant in sink & 43.0 & 40.0 & \textbf{63.0} & 50.0 \\
Close drawer & 72.0 & 64.5 & 82.0 & \textbf{89.5} \\
Open drawer & \textbf{98.0} & 96.5 & 81.5 & 92.0 \\
\midrule
Mean & 62.2 & 59.4 & 61.2 & \textbf{64.2} \\
\bottomrule
\end{tabular*}
\end{table}
We also evaluate the released GR00T N1.6 Bridge checkpoint on seven WidowX tasks from the GR00T fork of SimplerEnv~\cite{simplerenv,simplerfork} (Table~\ref{tab:simpler}; 300-step cap, four executed actions). Mean success is similar in these runs: $61.2\%$ for W4A4 and $64.2\%$ for o/d INT8, against $62.2\%$ for BF16 (paired over tasks, $p=0.89$ and $0.70$). These tests do not establish equivalence. The aggregate masks larger per-task differences: the mean absolute per-task difference between W4A4 and BF16 is $13.6$ percentage points, with its largest loss on the basket task ($-34.5$). Selective INT8 reduces it to $10.1$ points and improves open drawer ($81.5\to92.0\%$) and basket ($58.0\to70.5\%$).

\subsection{Held-out action fidelity}
On P2, retaining o/d at INT8 raises the minimum decoded-action cosine on all four checkpoints: from $0.461$ to $0.850$ for N1.6, $0.802$ to $0.912$ for N1.7, $0.970$ to $0.973$ for N1.5, and $0.847$ to $0.9985$ for \pih{}. Their median maximum absolute coordinate errors fall from $0.074$ to $0.039$ for N1.6 and from $0.099$ to $0.058$ for N1.7. W8A8 remains closer to the float reference on these metrics.

A few observations dominate the mean cosine deficit ($1-\cos$): the worst W4A4 observation contributes $65\%$ for N1.6 and $43\%$ for N1.7. Across 19 arms with P1/P3 measurements (FQ W8A8/W4A4 and ModelOpt SQ/AWQ on four checkpoints, plus three o/d INT8 arms), cosine is at least $0.998$ and its rank correlation with success differences is not significant (Spearman $\rho=0.20$, $p=0.42$). Shared checkpoints and references make this pooled correlation descriptive.

\subsection{Design ablations}
\begin{table}[t]
\centering\scriptsize
\setlength{\tabcolsep}{2.5pt}\renewcommand{\arraystretch}{1.05}
\caption{Offline fidelity and LIBERO ablations. Design comparisons report changes in successes out of 800 (first minus second), with paired task-level $p$ values in parentheses. Calibration controls are separated below; --: not evaluated.}
\label{tab:ablation}
\begin{tabular*}{\columnwidth}{@{\extracolsep{\fill}}llrr@{}}
\toprule
First vs.\ second & Offline & N1.6 & N1.7\\\midrule
Consistent vs.\ mismatched fold & cos $.9997$ vs.\ $.9923$ & -- & --\\
GPTQ vs.\ RTN & deficit $-25\%/{-37}\%$$^{a}$ & $-6$ ($.45$) & $+7$ ($.23$)\\
Butterfly vs.\ composite rot. & $|\Delta\cos|\le2{\times}10^{-4}$ & $+10$ ($.18$) & $-1$ ($.87$)\\
Fold-before vs.\ after & $|\Delta\cos|\le2{\times}10^{-4}$ & $+4$ ($.66$) & $-11$ ($.19$)\\
\textbf{o/d INT8 vs.\ W4A4} & deficit $-65\%$ & $+10$ ($.17$) & $+3$ ($.77$)\\
\midrule
\multicolumn{4}{@{}l}{\emph{Calibration controls (W8A8)}}\\
Per-token vs.\ per-tensor & LLM cos $.998$ vs.\ $.991$ & -- & --\\
ModelOpt per-tensor vs.\ BF16 & -- & $-41.0$\,pp$^{b}$ & $-5.5$\,pp$^{b}$\\
\bottomrule
\end{tabular*}

\vspace{2pt}
\parbox{\columnwidth}{\scriptsize Offline: action-module cosine for the fold comparison, decoded-action fidelity for the remaining design comparisons. Deficit is $1-\cos$; its changes are relative percentages. $^{a}$N1.6/N1.7; other offline entries are N1.6. $^{b}$Percentage-point SR changes: external ModelOpt per-tensor max calibration with no smoothing versus BF16 (800 episodes; McNemar $p<0.001$). This control also changes the calibration and execution pipeline.}
\end{table}
In Table~\ref{tab:ablation}, the external per-tensor control also changes calibration and execution, so it does not isolate scale granularity. The mismatched fold was tested offline only. FWHT and fold-before are kernel-speed choices with offline cosine changes of at most $2\times10^{-4}$. GPTQ and o/d INT8 improve fidelity; none of these paired design comparisons resolves a closed-loop difference over 800 episodes.

\section{Real-Robot Evaluation}
We deploy the Orin engines on single-arm ALOHA and SO-101, using the tasks in Fig.~\ref{fig:real_tasks}. GR00T N1.7 uses one fine-tune per embodiment. Each arm is tested for 20 episodes per task, interleaved within a session with matched object placements.

\begin{table}[t]
\centering\scriptsize
\setlength{\tabcolsep}{4pt}\renewcommand{\arraystretch}{1.05}
\caption{N1.7 on SO-101: SR (\%) over 20 trials per task and arm on Orin. T2--T4 are defined in Fig.~\ref{fig:real_tasks}; Avg.\ is their mean. $^{\ddagger}$Stopped after T4 because jerky motion risked the hardware; n/a: not evaluated.}
\label{tab:robot}
\begin{tabular*}{\columnwidth}{@{\extracolsep{\fill}}lrrrr@{}}
\toprule
Method & T2 & T3 & T4 & Avg.\\\midrule
BF16 PyTorch & 90.0 & \textbf{100.0} & 70.0 & 86.7 \\
TRT BF16 & 90.0 & \textbf{100.0} & 75.0 & 88.3 \\
ModelOpt W8A8 SQ$^{\ddagger}$ & n/a & n/a & 30.0 & n/a \\
ModelOpt W4A16 AWQ & n/a & n/a & 40.0 & n/a \\
\midrule
\textbf{FQ W8A8 (ours)} & \textbf{95.0} & \textbf{100.0} & 70.0 & 88.3 \\
\textbf{FQ W4A4 (ours)} & 85.0 & 95.0 & 65.0 & 81.7 \\
\textbf{FQ W4A4 + o/d INT8 (ours)} & \textbf{95.0} & \textbf{100.0} & \textbf{80.0} & \textbf{91.7} \\
\bottomrule
\end{tabular*}
\end{table}

\begin{table}[t]
\centering\scriptsize
\begin{minipage}[t]{0.47\columnwidth}
\centering
\caption{N1.7 on ALOHA: SR (\%), 20 trials per arm. T1 as in Fig.~\ref{fig:real_tasks}.}
\label{tab:robot_aloha}
\setlength{\tabcolsep}{3pt}
\begin{tabular*}{\linewidth}{@{\extracolsep{\fill}}lr@{}}
\toprule
Method & \shortstack[r]{T1: banana\\in pot, lid}\\\midrule
BF16 PyTorch & 90.0 \\
TRT BF16 & \textbf{95.0} \\
\midrule
\textbf{FQ W8A8 (ours)} & 85.0 \\
\textbf{FQ W4A4 (ours)} & 75.0 \\
\shortstack[l]{\textbf{FQ W4A4 + o/d}\\\textbf{INT8 (ours)}} & \textbf{95.0} \\
\bottomrule
\end{tabular*}
\end{minipage}\hfill
\begin{minipage}[t]{0.47\columnwidth}
\centering
\caption{\pih{} on SO-101 (its own fine-tune): SR (\%), 20 trials per arm. T4 as in Fig.~\ref{fig:real_tasks}.}
\label{tab:robot_pi}
\setlength{\tabcolsep}{3pt}
\begin{tabular*}{\linewidth}{@{\extracolsep{\fill}}lr@{}}
\toprule
Method & \shortstack[r]{T4: blue block\\on red block}\\\midrule
BF16 PyTorch & 80.0 \\
TRT BF16 & \textbf{100.0} \\
\midrule
\textbf{FQ W4A4 (ours)} & 85.0 \\
\bottomrule
\end{tabular*}
\end{minipage}
\end{table}
\textbf{Selective INT8 improves observed robot outcomes.} Tables~\ref{tab:robot} and~\ref{tab:robot_aloha} show an improvement over uniform W4A4 on every N1.7 task. Pooled success is $92.5\%$ for o/d INT8, $80.0\%$ for W4A4, $87.5\%$ for W8A8 and BF16 PyTorch, and $90.0\%$ for float TensorRT. The o/d INT8 gain costs $1$\,ms on Orin; these observed rates do not establish equivalence to W8A8 or BF16.

On logged ALOHA and SO-101 observations, the median action cosine against BF16 is $0.99962$ and $0.99961$ for N1.7 W4A4, respectively; selective INT8 raises these medians to $0.99975$ and $0.99988$.

ModelOpt W8A8 SmoothQuant uses static per-tensor scales and reached $30.0\%$ on T4 versus $70.0\%$ for BF16 and FQ W8A8. Jerky motion risked the hardware, so testing stopped at T4 and this arm is excluded from the aggregate. Its median SO-101 action cosine is $0.99892$ despite poor task performance. FQ W8A8 also uses rotations, precluding attribution solely to per-token scaling. ModelOpt W4A16 AWQ reaches $40.0\%$ on T4, with mean action cosine $0.99980$ and minimum $0.99882$; it is also excluded from the four-task aggregate.

For \pih{} on SO-101 T4 (Table~\ref{tab:robot_pi}), W4A4 success is close to its own BF16 reference despite a median action cosine of $0.98994$, substantially below N1.7's. A fixed cosine threshold therefore does not transfer automatically between checkpoints.

\section{Limitations and Reproducibility}
\label{sec:threats}
\textbf{Architecture dependence.} Consistent folding does not guarantee useful W4A4 accuracy for every VLA. Archived exploratory LIBERO campaigns (800 episodes per arm) give SmolVLA~\cite{smolvla} $27.1\%$ versus BF16's $71.1\%$ ($217$ versus $569$ successes), and Evo-1~\cite{evo1} $81.0\%$ versus $91.6\%$ ($648$ versus $733$). These are configuration-specific failures; calibration differences limit attribution of Evo-1's entire deficit to bit width. We have not established that o/d INT8 restores either architecture.

\textbf{Runtime and hardware.} Projection coverage does not guarantee a speedup. Native INT4 is measured on Ada and Orin; the inspected H100 backend lowers four-bit operands to INT8 arithmetic. H100 success therefore establishes neither native-INT4 latency nor bitwise cross-device equivalence.

\textbf{Statistical and behavioral scope.} Most simulator arms have one campaign over 40 tasks, and matched initial states do not eliminate serving noise. Physical trials number only 20 per task; \pih{} is tested on one task. An exploratory two-sided Fisher exact test on the pooled N1.7 robot counts ($74/80$ versus $64/80$) gives $p=0.037$, but ignores matched placements and task/session dependence. We therefore report the robot gain as an observed difference. P1/P2 each use 32 observations, cosine ignores magnitude, and binary success omits smoothness and contact forces.

\textbf{Attribution and reproducibility.} N1.6/N1.7 deployment comparisons include calibration changes; only preset-matched controls isolate precision. The two o/d types are evaluated together, leaving their individual effects unresolved. One desktop timing run does not quantify run-to-run variability. Reproduction requires each result's checkpoint, calibration samples, transform preset, and evaluation protocol.

\section{Conclusion}
FoldQuantVLA carries consistent activation coordinates from calibration to native low-bit execution in both language backbone and action expert. W4A4 reduces latency beyond compiled float engines on desktop and Orin. Selective o/d INT8 improves held-out fidelity and raises observed N1.7 robot success from $80.0\%$ to $92.5\%$ for $1$\,ms on Orin, though its LIBERO gains remain unresolved and its cost is larger for \pih{}. Deployment requires checkpoint-specific latency and closed-loop evaluation.
\bibliographystyle{IEEEtran}
\bibliography{refs}
\end{document}